\definecolor{WowColor}{rgb}{.75,0,.75}
\definecolor{SubtleColor}{rgb}{0,0,.50}
\newcounter{margincounter}
\newtheorem{theorem}{Theorem}[section]	
\newtheorem{lemma}[theorem]{Lemma}
\theoremstyle{definition} \newtheorem{definition}{Definition}
    \def\independenT#1#2{\mathrel{\setbox0\hbox{$#1#2$}%
    \copy0\kern-\wd0\mkern4mu\box0}} 
\icmltitlerunning{The Random Forest Kernel (and creating other kernels for big data from random partitions)}
\begin{document} 

\twocolumn[
\icmltitle{The Random Forest Kernel\\and creating other kernels for big data from random partitions}

% It is OKAY to include author information, even for blind
% submissions: the style file will automatically remove it for you
% unless you've provided the [accepted] option to the icml2014
% package.
\icmlauthor{Alex Davies}{ad564@cam.ac.uk}
\icmladdress{Dept of Engineering, University of Cambridge
			 Cambridge, UK}
\icmlauthor{Zoubin Ghahramani}{zoubin@cam.ac.uk}
\icmladdress{Dept of Engineering, University of Cambridge
			 Cambridge, UK}

% You may provide any keywords that you 
% find helpful for describing your paper; these are used to populate 
% the "keywords" metadata in the PDF but will not be shown in the document
\icmlkeywords{boring formatting information, machine learning, ICML}

\vskip 0.3in
]

\begin{abstract}

We present Random Partition Kernels, a new class of kernels derived by demonstrating a natural connection between random partitions of objects and kernels between those objects. We show how the construction can be used to create kernels from methods that would not normally be viewed as random partitions, such as Random Forest. To demonstrate the potential of this method, we propose two new kernels, the Random Forest Kernel and the Fast Cluster Kernel, and show that these kernels consistently outperform standard kernels on problems involving real-world datasets. Finally, we show how the form of these kernels lend themselves to a natural approximation that is appropriate for certain big data problems, allowing $O(N)$ inference in methods such as Gaussian Processes, Support Vector Machines and Kernel PCA.

\end{abstract}

\section{Introduction}

Feature engineering is often cited as the \emph{``the most important factor''} \cite{Domingos2012-go} for the success of learning algorithms. In a world of kernelized algorithms this means, unsurprisingly, that the kernel is the most important factor in the success of the underlying algorithm. Since there will likely never be a single best kernel for all tasks \cite{Wolpert1997-hh}, the best hope for kernelized modelling is to have many available standard kernels and methods to easily generate new kernels from knowledge and intuition about the problem domain.

The most commonly used kernels are generally analytically derived. An informal review of kernel machine papers shows that despite a large number of available kernels, for real valued inputs practitioners overwhelmingly use a very small subset: Linear, Periodic, Radial Basis Function (RBF), Polynomial, Spline and ANOVA, with RBF being far and away the most common, and almost exclusively used for high-dimensional data. These kernels are empirically effective for many tasks and there are instances where there are compelling arguments that they are accurate models of the data. However, it is unfortunate when they are used as ``default'' kernels, encoding at best a bias towards smoothness, which even then may not be the best assumption in real-world, high dimensional problems.

These also leave little recourse for constructing new kernels. They can be combined through the use of certain operators such as $+$ and $\times$ and some work has been done towards automatically searching these structures \cite{Duvenaud2013-rk}, but this is a time consuming and difficult task. An example of a general purpose method for constructing kernels are Fisher kernels \cite{Jaakkola1999-zp}. Fisher kernels provide a method to construct a sensible kernel from a generative model of the data. For cases where there is a natural generative model of the data, this is a compelling and intuitive kernel.

Clustering has a long history in machine learning and statistics precisely because it as a very simple and intuitive task. Originally, clustering methods were designed to find ``the best'' partition of a datset. More recent probabilistic methods allow uncertainty and instead learn a distribution on what the clusterings of the data might be. These distributions are known as partition distributions, and samples from them are random partitions.

In this paper we show how it is very simple to define a distribution over partitions, and how many existing algorithms trivially define such distributions. We show how a kernel can be constructed from samples from these distributions (random partitions) and that these kernels outperform common kernels on real-world data. We also show how the construction of the kernel leads to a simple approximation scheme that allows for scalable inference in SVMs, Kernel PCA and GP regression.

\section{Background}

\subsection{Kernel methods}
\label{sec:kernels}
A kernel is a positive semi-definite (PSD) function $k(a,b) : \mathcal{I} \times \mathcal{I} \rightarrow \mathbb{R}$ that is used to indicate the similarity of two points in a space $\mathcal{I}$. For a dataset $\mathbf{X} = \{\vec{x}_1,...,\vec{x}_N\}$, the \emph{Gram matrix} of $\mathbf{X}$ is $\mathbf{K} \in \mathbb{R}^{N\times N}$, where $\mathbf{K}_{ab} = k(\vec{x}_a,\vec{x}_b)$. Many machine learning problems can be formulated in terms of this Gram matrix, rather than the more traditional design matrix $\mathbf{X}$. These methods are collectively refered to as \emph{kernel machines} \cite{Herbrich2002-vd}. The choice of $k$ (and thus $\mathbf{K}$) is critical to the predictive performance of any of these algorithms, in the same way that the choice of a sensible feature space is to traditional methods. This is because the choice of the kernel implicitly sets the feature space being used: the kernel can always be viewed as $k(\vec{x}_a, \vec{x}_b) = \left<\phi(\vec{x}_a),\phi(\vec{x}_b)\right>$, where $\phi$ is the implicit feature space mapping.\\

\subsection{Iterative methods and pre-conditioning}
\label{sec:iterative}

One issue with kernel machines is that they can be very computationally expensive. To simply evaluate every element in a Gram matrix requires $O(N^2)$ operations, and many kernel machine algorithms are $O(N^3)$. Fortunately in some cases the solutions to these problems can be found using matrix-free iterative solvers. These solvers are called matrix-free because they do not require the full Gram matrix $\mathbf{K}$, only the ability to calculate $\mathbf{K}\vec{v}$ for arbitrary $\vec{v}$. When $\mathbf{K}$ can be stored in less that $O(N^2)$ space, and $\mathbf{K}\vec{v}$ calculated in less than $O(N^2)$ time, these methods can offer great computational savings over direct methods.

Matrix-free iterative methods include \emph{conjugate gradient} \cite{Shewchuk1994-sr}, for finding the minimum of a quadratic form and \emph{power iteration} \cite{Press2007-ie} for finding the eigenvectors of a matrix. Three common kernel machines that can use matrix free methods are Gaussian Processes; calculating the posterior mean requires a solution to $\mathbf{K}^{-1}\vec{y}$, which can be solved via conjugate gradient \cite{Shewchuk1994-sr}, SVMs, which can be solved using gradient descent on the objective, where the derivative only depends on $\mathbf{K}\alpha$ and Kernel PCA, which can be solved with power iteration.

The numerical stability and convergence rate of iterative methods are directly related to the condition number of the Gram matrix $\kappa(\mathbf{K})$. A low value of the condition number will lead to a numerically stable solution in a small number of iterations, whereas a problem with a very high condition number may not converge at all. Conjugate gradient, for example, has a convergence rate of $\sqrt{\kappa(\mathbf{K})}$.

If we can construct a matrix $\mathbf{B}$, for which we can efficiently evaluate $\mathbf{B}\vec{v}$, and where $\kappa(\mathbf{B}\mathbf{K}) \ll \kappa(\mathbf{K})$, we can perform these iterative methods on the transformed system $\mathbf{B}\mathbf{K}$. This matrix $\mathbf{B}$, known as a \emph{pre-conditioning} matrix, can be thought of as an ``approximate inverse'', that brings $\mathbf{K}$ closer to the identity matrix (which has the lowest possible condition number of 1 and would require only one iteration to solve).\\

\subsection{Random partitions}

A cluster $C$ in a dataset $\mathcal{D}$ is a non-empty subset of $\mathcal{D}$. A partition of $\mathcal{D}$ is the segmentation of $\mathcal{D}$ into one or more non-overlapping clusters $\varrho = \{C_1, ... , C_j\}$. ie:

\begin{eqnarray}
C_i \cap C_j &= \emptyset\\
\bigcup_i C_i &= \mathcal{D}
\end{eqnarray}

The following is an example dataset and an example partition of that dataset:

\begin{eqnarray}
\mathcal{D} =& \{a,b,c,d,e,f\}\\
\varrho =& \{\{a,e\},\{c\},\{d,b,f\}\}
\end{eqnarray}

A \emph{random partition} of $\mathcal{D}$ is a sample from a partition distribution $\mathcal{P}$. This distribution is a discrete pdf that represents how likely a given clustering is. We will use the notation $\varrho(a)$ to indicate the cluster that the partition $\varrho$ assigns to point $a$.\\

Random partitions have been studied extensively in the field of non-parametric Bayesian statistics. Well-known Bayesian models on random partitions are the Chinese Restaurant Process \cite{Aldous1985-lo}, the Pitman-Yor Process \cite{Pitman1997-iv} and the Mondrian Process \cite{Daniel_M_Roy2009-bo}, while many other combinatorial models can be used to define random partitions.

\subsection{Defining distributions with programs}

Ordinarily a distribution is defined by its probability density function, or alternatively by some other sufficient description such as the cumulative density function or moment generating function. However, a distribution can also be defined by a program that maps from a random seed $R$ to the sample space of the distribution. This program can be viewed in two ways: firstly, as a transformation that maps from a random variable to a new output space, and alternatively as a program that generates samples from the desired distribution.

Thus any program that takes a datset as input and outputs a random partition defines a partition distribution. This is the representation that will allow us to easily construct Random Partition Kernels.

\section{Random Partition Kernels}

A partition distribution naturally leads to a kernel in the following way:

\begin{definition}
Given a partition distribution $\mathcal{P}$, we define the kernel \[k_\mathcal{P}(a,b) = \mathbb{E}\left[I\left[\varrho(a) = \varrho(b)\right]\right]_{\varrho \sim \mathcal{P}}\] to be the random partition kernel induced by $\mathcal{P}$, where I is the indicator function.
\end{definition}

That is, the kernel is defined to be the fraction of the time that two points are assigned to the same cluster.

\begin{lemma}
$k_\mathcal{P}(a,b)$ constitutes a valid PSD kernel.
\end{lemma}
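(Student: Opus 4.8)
The plan is to exhibit $k_\mathcal{P}$ as an expectation of elementary kernels, each of which is manifestly PSD, and then invoke closure of the PSD cone under nonnegative averaging. First I would fix a single partition $\varrho$ and study the ``conditional'' kernel $k_\varrho(a,b) = I[\varrho(a)=\varrho(b)]$, so that by definition $k_\mathcal{P}(a,b) = \mathbb{E}_{\varrho\sim\mathcal{P}}[k_\varrho(a,b)]$. Symmetry of $k_\mathcal{P}$ is then immediate, since the condition $\varrho(a)=\varrho(b)$ is symmetric in $a$ and $b$, and the summand is an indicator taking values in $[0,1]$, so the expectation is automatically well-defined and no integrability issue arises.

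The crux is to show each $k_\varrho$ is PSD, and I would do this by producing an explicit feature map. For a fixed partition $\varrho=\{C_1,\dots,C_j\}$, let $\phi_\varrho(a)$ be the one-hot vector in $\mathbb{R}^j$ whose single nonzero entry is the index of the cluster containing $a$. Then $\left<\phi_\varrho(a),\phi_\varrho(b)\right>$ equals $1$ when $a$ and $b$ share a cluster and $0$ otherwise, i.e. exactly $k_\varrho(a,b)$; since $k_\varrho$ is an inner product of feature vectors it is PSD. Equivalently, after permuting any finite sample so that points are grouped by cluster, the Gram matrix of $k_\varrho$ is block diagonal with each block an all-ones matrix $\vec{1}\vec{1}^\top$, which is rank-one and PSD, and a block-diagonal matrix of PSD blocks is PSD.

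Finally I would lift PSD-ness through the expectation. For any finite sample $\vec{x}_1,\dots,\vec{x}_N$ and any $\vec{c}\in\mathbb{R}^N$,
\[
\sum_{a,b} c_a c_b\, k_\mathcal{P}(\vec{x}_a,\vec{x}_b) = \mathbb{E}_{\varrho\sim\mathcal{P}}\!\left[\sum_{a,b} c_a c_b\, k_\varrho(\vec{x}_a,\vec{x}_b)\right] \ge 0,
\]
where the interchange of the finite sum with the expectation is just linearity, the inner quadratic form is nonnegative for every $\varrho$ by the previous step, and the expectation of a nonnegative quantity is nonnegative. This establishes that $k_\mathcal{P}$ is a valid PSD kernel.

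I expect the only genuinely substantive point to be the single-partition step, and even that is disarmed as soon as the one-hot feature map is written down; the remaining work (symmetry, well-definedness via boundedness of the indicator, and passing to the expectation by linearity) is entirely routine.
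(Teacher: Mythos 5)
Your proposal is correct, and its skeleton matches the paper's: both reduce $k_\mathcal{P}$ to the single-partition kernels $k_\varrho(a,b)=I[\varrho(a)=\varrho(b)]$, prove each $k_\varrho$ is PSD, and then lift positivity through the averaging over $\varrho$. The differences are in how the two sub-steps are executed, and in both cases your version is the tighter one. For the single-partition step, the paper permutes the Gram matrix into a block-diagonal matrix of all-ones blocks and invokes invariance of eigenvalues under permutation; your one-hot feature map $\phi_\varrho$ gives the same conclusion more directly (you mention the block-diagonal view too, so this is a cosmetic difference). The more substantive divergence is in the lifting step: the paper rewrites the expectation as $\lim_{n\to\infty}\frac{1}{n}\sum_{\varrho\sim\mathcal{P}}^{n} k_\varrho(a,b)$, i.e.\ a limit of empirical averages, and then appeals to closure of kernels under positive linear combinations --- which strictly speaking also requires closure of the PSD cone under pointwise limits, a step the paper leaves implicit, as well as an almost-sure law-of-large-numbers justification for the limit itself. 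You instead apply linearity of expectation directly to the finite quadratic form $\sum_{a,b} c_a c_b\, k_\mathcal{P}(\vec{x}_a,\vec{x}_b)$ and use nonnegativity of the expectation of a nonnegative random variable; this avoids any limit argument, works verbatim for arbitrary (discrete or continuous) partition distributions, and is the cleaner proof. What the paper's formulation buys in exchange is a direct segue into its $m$-approximate kernel (Definition \ref{defn:stoch_kernel}), since the finite empirical average appearing in its proof is exactly the object that is later used as the Monte Carlo estimator.
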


\begin{proof}

First we define:

\[k_\varrho(a,b) = I\left[\varrho(a) = \varrho(b)\right]\]

To prove that $k_\mathcal{P}$ is PSD, we decompose the expectation into the limit of a summation and show that the individual terms are PSD.

\begin{align}
k_\mathcal{P}(a,b) &= \mathbb{E}\left[I\left[\varrho(a) = \varrho(b)\right]\right]_{\varrho \sim \mathcal{P}}\\
&= \lim_{n \rightarrow \infty} \frac{1}{n}\sum_{\varrho \sim \mathcal{P}}^n I\left[\varrho(a) = \varrho(b)\right]\\
&= \lim_{n \rightarrow \infty} \frac{1}{n}\sum_{\varrho \sim \mathcal{P}}^n k_\varrho(a,b)
\end{align}

For any dataset of size $N$, the kernel matrix for $k_{\varrho}$ will be an $N \times N$ matrix that can be permuted into a block diagonal matrix of the following form:

\[\mathbf{Z}\mathbf{K}_{\varrho}\mathbf{Z}^\intercal = \begin{bmatrix}
\mathbf{1} & \mathbf{0} & \ldots & \mathbf{0}\\
\mathbf{0} & \mathbf{1} & \hdots & \mathbf{0}\\
\vdots & \vdots & \ddots & \mathbf{0}\\
\mathbf{0} & \mathbf{0} & \mathbf{0} & \mathbf{1}\\
\end{bmatrix}\]

where $\mathbf{1}$ is a matrix with all entries equal to 1, $\mathbf{0}$ is a matrix will all entries 0 and $\mathbf{Z}$ is a permutation matrix. Each $\mathbf{1}$ matrix represents a single cluster.\\

From this we can conclude that $\mathbf{Z}\mathbf{K}_{\varrho}\mathbf{Z}^\intercal$ is PSD, as it is a block matrix of PSD matrices. Further, since a permutation does not affect the eigenvalues of a matrix, $\mathbf{K}_{\varrho}$ is PSD. Since $\mathbf{K}_{\varrho}$ is PSD for any dataset, $k_{\varrho}$ must be a valid kernel. Finally, since a linear combination of kernels with positive coefficients is also a valid kernel, $k_{\mathcal{P}}$ is a valid kernel.

\end{proof}

\subsection{$m$-approximate Random Partition Kernel}

However in many instances, such as when defining a partition distribution with a program, it is not possible to analytically evaluate this quantity. Fortunately, its structure allows for a simple approximation scheme that only requires the ability to sample from the distribution $\mathcal{P}$. 

\begin{definition}
The $m$-approximate Random Partition Kernel is the fraction of times that $\mathcal{\varrho}$ assigns $a$ and $b$ to the same cluster over $m$ samples.
\[k_\mathcal{P}(a,b) \approx \tilde{k}_{\mathcal{P}}(a,b) = \frac{1}{m}\sum_{\varrho \sim \mathcal{P}}^m k_\varrho(a,b)\]
\label{defn:stoch_kernel}
\end{definition}

\begin{lemma}
If the samples from $\mathcal{P}$ are independent then the bound on the variance of the approximation to $k_\mathcal{P}(a,b)$ is $O\left(\frac{1}{m}\right)$.
\end{lemma}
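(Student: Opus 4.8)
The plan is to recognise that, for a fixed pair of points $a$ and $b$, each summand $k_\varrho(a,b) = I[\varrho(a) = \varrho(b)]$ is an indicator and therefore a Bernoulli random variable taking values in $\{0,1\}$. Writing $p = k_\mathcal{P}(a,b) = \mathbb{E}[k_\varrho(a,b)]$ for its mean (which is well defined since the variable is bounded), the estimator $\tilde{k}_\mathcal{P}(a,b)$ is nothing more than the empirical average of $m$ such indicators. Because the partitions are sampled independently, these indicators are i.i.d. Bernoulli$(p)$, so the statement reduces to the standard fact that the variance of a sample mean of i.i.d. bounded variables decays like $1/m$.

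First I would record the variance of a single term: a Bernoulli$(p)$ variable has $\mathrm{Var}[k_\varrho(a,b)] = p(1-p)$. Next I would use independence to discard all cross-covariance terms, so that the variance of the average factorises cleanly as
\[
\mathrm{Var}\!\left[\tilde{k}_\mathcal{P}(a,b)\right] = \frac{1}{m^2}\sum_{i=1}^{m}\mathrm{Var}[k_{\varrho_i}(a,b)] = \frac{p(1-p)}{m}.
\]
Finally I would remove the dependence on the unknown $p$ by noting that $p(1-p)$ is maximised at $p = \tfrac{1}{2}$, so $p(1-p) \le \tfrac{1}{4}$ for every $p \in [0,1]$, yielding $\mathrm{Var}[\tilde{k}_\mathcal{P}(a,b)] \le \tfrac{1}{4m} = O(1/m)$.

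There is no serious obstacle in this argument; the only two points deserving attention are both already supplied by the hypotheses. Independence of the samples is exactly what licenses dropping the covariance cross-terms --- without it the variance of the average need not shrink at rate $1/m$ --- and the uniform bound $p(1-p) \le 1/4$ is what makes the constant in the $O(1/m)$ hold simultaneously for all pairs $(a,b)$, irrespective of their true co-clustering probability. If a high-probability guarantee were wanted rather than a variance bound, I would follow this with Chebyshev's inequality, but that is beyond what the statement requires.
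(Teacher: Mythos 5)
Your proof is correct and follows essentially the same route as the paper's: both identify $k_\varrho(a,b)$ as an i.i.d.\ Bernoulli$(p)$ sample with $p = k_\mathcal{P}(a,b)$, view $\tilde{k}_\mathcal{P}(a,b)$ as the sample mean (the paper calls it the ML estimator), and bound its variance by $\frac{1}{4m}$. The only difference is that you explicitly derive the bound $p(1-p)/m \le \frac{1}{4m}$ where the paper simply cites it as a standard fact.
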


\begin{proof}

If $\varrho$ are independent samples from $\mathcal{P}$, then 

\[k_\varrho(a,b) \sim Bernoulli\left(k_\mathcal{P}(a,b)\right)\]

and $\tilde{k}_{\mathcal{P}}(a,b)$ is the maximum likelihood estimator for $k_\mathcal{P}(a,b)$. The variance of the ML estimator for a Bernoulli is bounded by $\frac{1}{4m}$.
\end{proof}

\subsection{The Colonel's Cocktail Party}

This process of evaluating the kernel described in Definition \ref{defn:stoch_kernel} can be described succintly using a metaphor in the tradition of the CRP \cite{Aldous1985-lo} and IBP \cite{Griffiths2011-vj}. 

We consider The Colonel, who is the host of a cocktail party. He needs to determine the strength of the affinity between two of his guests, Alice and Bob. Neither Alice and Bob, nor the other guests, must suspect the Colonel's intentions, so he is only able to do so through surreptuous observation.

At the beginning of the evening, his $N$ guests naturally form into different groups to have conversations. As the evening progresses, these groups naturally evolve as people leave and join different conversations. At $m$ opportunities during the course of the evening, our host notes whether Alice and Bob are together in the same conversation.

%\begin{figure}[h!]
%\center
%\includegraphics[width=200px]{}
%\caption{Illustration of the cocktail party}
%\label{fig:cpk_example}
%\end{figure}

As the Colonel farewells his guests, he has a very good idea of Alice and Bob's affinity for one another.

\subsection{Efficient evaluation}
\label{sec:efficient_eval}
As stated in Section \ref{sec:iterative}, a large class of kernel algorithms can be efficiently solved if we can efficiently solve $\mathbf{K}\vec{v}$ for arbitrary $\vec{v}$. A partition matrix $\mathbf{K}_\varrho$ can be stored in $N$ space and multiplied by a vector in $2N$ operations using the analytic form in Eq \ref{eq:partition_mul}. 

\begin{equation}
\left(\mathbf{K}_\varrho\vec{v}\right)_i = \sum_{j \in \varrho(i)} \vec{v}_j\label{eq:partition_mul}
\end{equation}

It follows that the kernel matrix for an $m$-approximate Random Partition Kernel requires $mN$ space and $2mN$ operations for a matrix-vector product. In an iterative algorithm this leads to an overall complexity of $2mNI$ operations.\\

For the Random Partition Kernel, we propose the following pre-conditioning matrix:

\begin{equation}
\mathbf{B} = m \sum_{\varrho\in\mathcal{P}}^m \left(\mathbf{K}_\varrho + \sigma\mathbf{I}\right)^{-1}
\end{equation}

Where $\sigma$ is set as a small constant to ensure the matrix is invertible. Due to the simple form of the individual cluster matrices, we can compute $\left(\mathbf{K}_\varrho + \sigma\mathbf{I}\right)^{-1}\vec{v}$ in only $2N$ operations and $\mathbf{B}\vec{v}$ in $2mN$ operations using the analytic form in Eq \ref{eq:partition_solve}.

\begin{equation}
\left(\left(\mathbf{K}_\varrho + \sigma\mathbf{I}\right)^{-1}\vec{v}\right)_i = \frac{1}{\sigma} \vec{v}_i - \frac{1}{|\varrho(i)| + \sigma}\sum_{j \in \varrho(i)} \vec{v}_j\label{eq:partition_solve}
\end{equation}

This small multiplicative overhead to the iterative solver greatly reduces the condition number and the number of iterations required for a solution in most cases.

\section{Constructing Kernels}

To demonstrate the potential of this method, we first show how to obtain random partitions from a large class of existing algorithms. Then we select two example partition distributions to propose two novel kernels for real-world data: the \emph{Random Forest Kernel} and the \emph{Fast Cluster Kernel}. We also show examples of how some common types of data can be directly interpreted as random partitions.

\subsection{Example random partitions}

To demonstrate the ease with which sensible random partitions can be constructed from exisiting machine learning algorithms, we provide a number of examples. This list is by no means exhaustive, but serves as an example of how random partitions can be defined.

\subsubsection{Stochastic clustering algorithms}

Any standard clustering algorithm (K-means \cite{MacKay2003-oh} and DBSCAN \cite{Ester1996-me} being just a couple of examples) generates a partition for a given dataset. Adding any element of randomness to the algorithms (if it does not already exist) results in a stochastic clustering algorithm.

Elements of the clustering algorithm that can be randomized include:

\begin{enumerate}
\item Initializations
\item Number of clusters
\item Subsets of the data (Bagging)
\item Subset of the features
\item Projections of the features
\end{enumerate} 

The output for a stochastic clustering algorithm is a random partition. As a simple concrete example, the output of K-means with random restarts returns a random partition. 

\subsubsection{Ensembled tree methods}

A large class of random partitions can be derived from \emph{ensembled tree methods}. We use this term to refer to any method whose output is a collection of random trees. By far the most well known of these are Random Forest \cite{Breiman2001-fl} and Boosted Decision Trees \cite{Freund1997-xn}, though it also includes Bayesian Additive Regression Trees \cite{Chipman2010-os}, Gradient Boosting Machines with decision trees \cite{Friedman2000-aj} and many others.\\

As a tree defines a hierarchical partition, it can be converted to a partition by randomly or deterministically choosing a height of the tree and taking the partition entailed by the tree at that height. A more detailed example of this is outlined in the Random Forest Kernel algorithm in Section \ref{sec:rf_algo}.

\subsection{Data-defined partitions}

There are a number of real-world examples where the data can be directly interpreted as samples from a random partition. 

\subsubsection{Collaborative filtering}

For example, in collaborative filtering for movie recommendations, we can cluster users by randomly sampling a movie and splitting them into two clusters: viewed and not viewed. Similarly, we can cluster movies but splitting them into clusters for which were viewed, or not, by a particular user.

\subsubsection{Text analysis}

Similarly, when considering text mining applications, we can view documents as binary partitions of words: those in the document and those that are not. 

\subsection{Example kernels}

To further understand the kernels and perform experiments, we select two example random partitions to construct the \emph{Random Forest Kernel} and the \emph{Fast Cluster Kernel}.

\subsubsection{Random Forest Kernel}

\label{sec:rf_algo}

\begin{algorithm}[tb]
   \caption{Random Forest Partition Sampling Algorithm}
   \label{alg:rf_sc}
\begin{algorithmic}
   \STATE {\bfseries Input:} $\mathbf{X} \in \mathcal{R}^{N \times D},\vec{y} \in \mathcal{R}^N,h$
   \STATE $T \sim {\rm RandomForestTree}(\mathbf{X},\vec{y})$
   \STATE $d \sim {\rm DiscreteUniform}(0,h)$
   \FOR{$a \in \mathcal{D}$}
   \STATE ${\rm leafnode} = T(a)$
   \STATE $\varrho(a) = \text{ancestor}(d, {\rm leafnode})$
   \ENDFOR
\end{algorithmic}
\end{algorithm}

To construct the Random Forest Kernel, we define the following random partition sampling scheme, summarized in Algorithm \ref{alg:rf_sc}.\\

Firstly a tree $T$ is generated from the Random Forest algorithm. A tree generated by Random Forest is a decision tree that is trained on a subset of the data and features; however, every datapoint in $\mathcal{D}$ is associated with a leaf node. For simplicity of explanation, we assume that $T$ is a binary tree with $2^h$ nodes, but this is not necessary in general. Next a height is sampled from $d \sim {\rm DiscreteUniform}(0,h)$ which determines at what level of the tree the partitioning is generated. The nodes of $T$ at height $d$ of the tree indicate the different clusters, and each datapoint is associated to the cluster whose node is the ancestor of the datapoint's node.\\

Using this partition sampling scheme in Definition \ref{defn:stoch_kernel} gives us the Random Forest Kernel. This kernel has a number of interesting properties worth higlighting:

(1) \emph{When used in Gaussian Processes, it is a prior over piece-wise constant functions}. This is true of all Random Partition Kernels. The posterior of a Gaussian Process on a toy 1-D dataset is shown in Figure \ref{fig:rf_posterior}.\\
(2) \emph{The GP prior is also non-stationary}. A property of real-world datasets that is often not reflected in analytically derived kernels is non-stationarity; where the kernel depends on more than just the distance between two points.\\
(3) \emph{It has no hyper-parameters}. This kernel has no hyper-parameters, meaning there is no costly hyper-parameter optimization stage. This normal extra  ``training'' phase in GPs and SVMs is replaced by the training of the random forests, which in most cases would be more computationally efficient for large datasets.\\
(4) \emph{It is a supervised kernel}. While generally kernels are unsupervised, there is nothing inherently incorrect about the use of a supervised kernel such as this, as long as it is only trained on the training data. While this could reasonably lead to overfitting for regression/classification, the experiments do not show any evidence for this. It can be used for supervised dimensionality reduction, a plot of which is shown in Figure \ref{fig:rf_pca}.\\

\begin{figure}
\center
\begin{tabular}{c}
\includegraphics[width=200px]{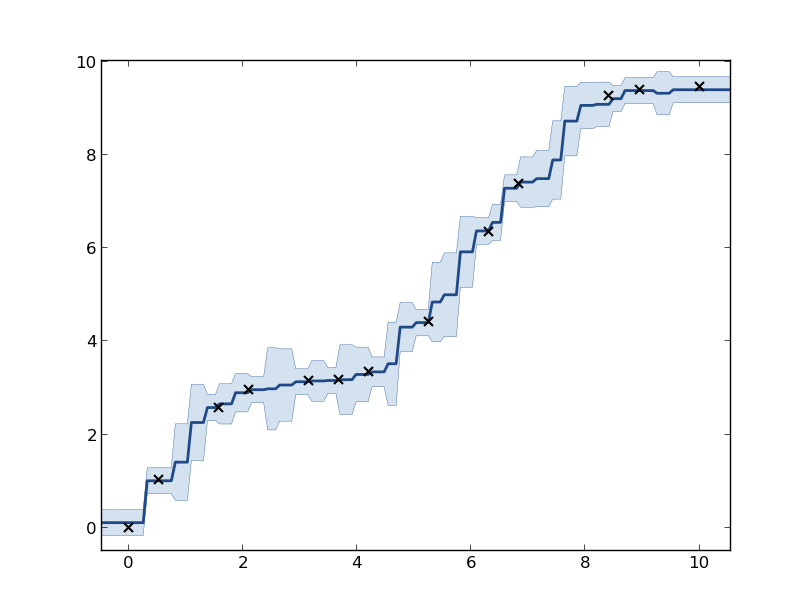}\\
Random Forest\\
\includegraphics[width=200px]{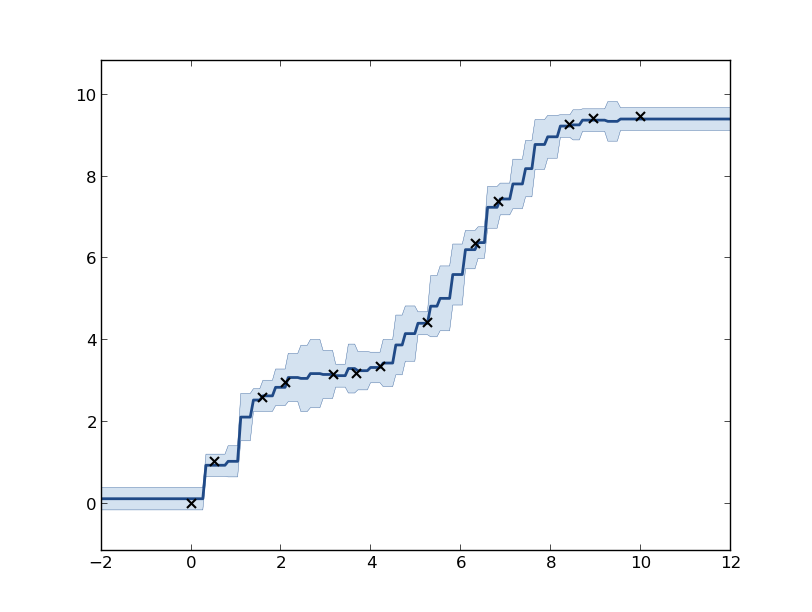}\\
Fast Cluster\\
\end{tabular}
\caption{GP posterior distribution on toy regression dataset}
\label{fig:rf_posterior}
\end{figure}

\begin{figure}[h!]
\center
\begin{tabular}{c}
\includegraphics[width=200px]{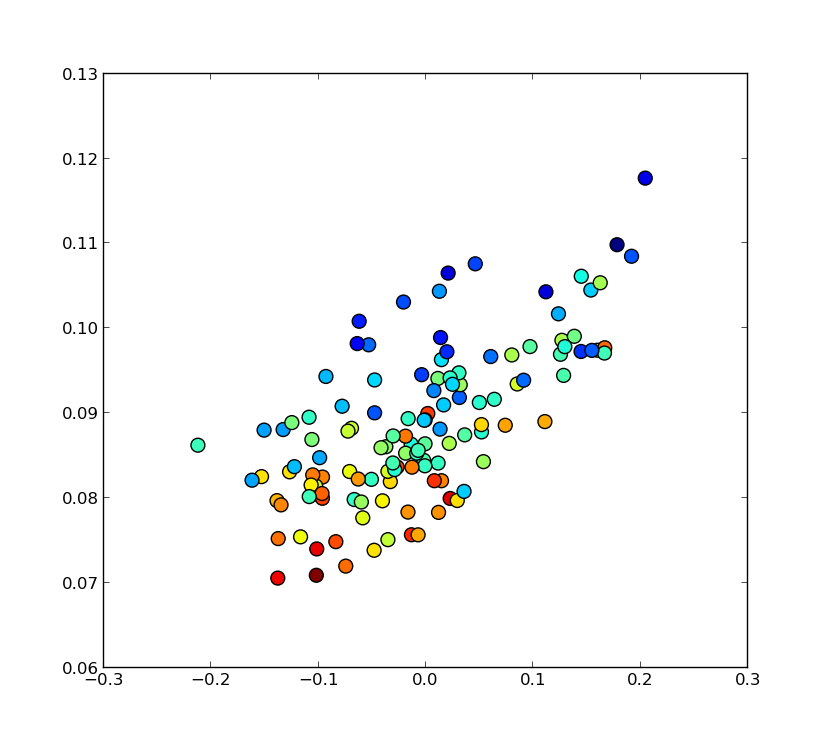}\\
PCA\\
\includegraphics[width=200px]{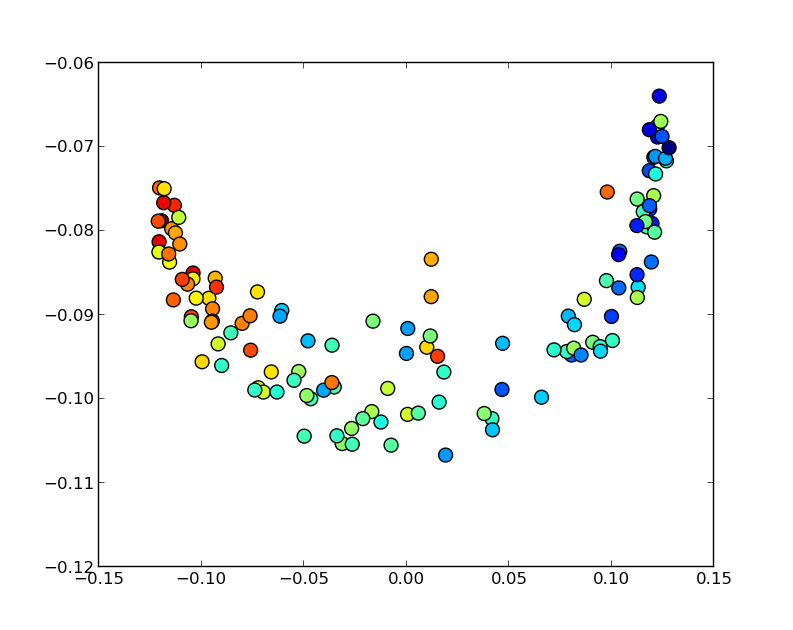}\\
Fast Cluster PCA\\
\includegraphics[width=200px]{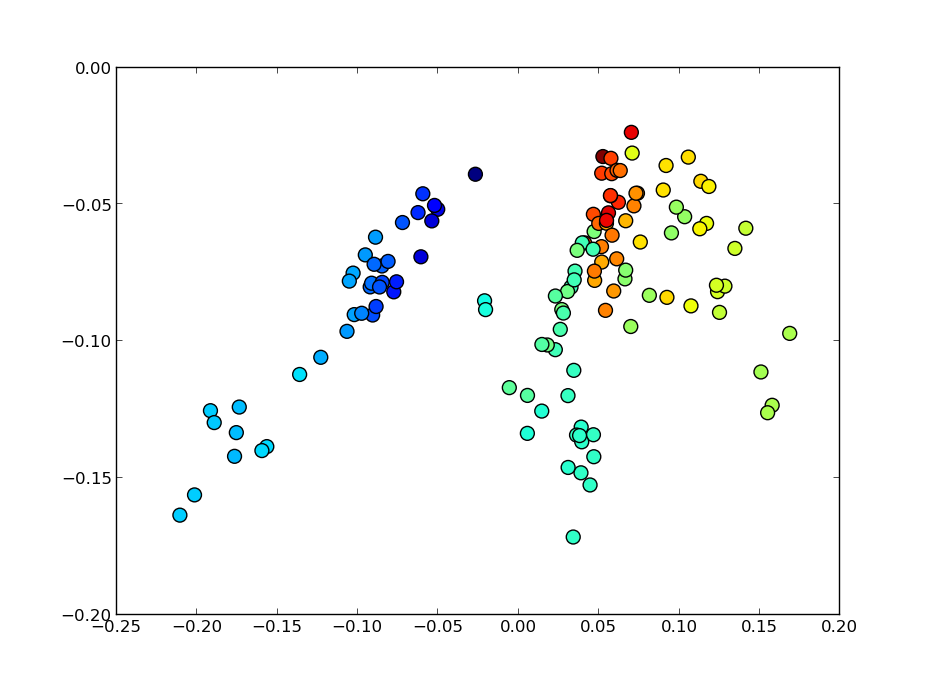}\\
Random Forest PCA\\
\end{tabular}
\caption{PCA vs FC Kernel PCA vs RF Kernel PCA on `bodyfat' dataset}
\label{fig:rf_pca}
\end{figure} 

\subsubsection{Fast Cluster Kernel}

The Fast Cluster Kernel is based around a fast random partition algorithm inspired by properties of Random Forest. The algorithm is as follows: First, sample a subset of the dimensions, then select a random number of cluster centers. Assign every point to the cluster associated with its nearest center measured only on the subsampled dimensions. This is outlined in Algorithm \ref{alg:rc_sc}.\\

This cluster scheme was constructed largely for it's simplicity and speed, scaling as $O(N)$. From Figure \ref{fig:rf_posterior} we can see that once again it results in a distribution over piece-wise constant functions that is non-stationary, with smoother variances than the Random Forest kernel. It is also worth noting that when being used for classification/regression the Fast Cluster kernel can easily be used in a ``semi-supervised'' manner, by training it on a larger set of $\mathbf{X}$ than is used for the final learning task.

\begin{algorithm}[tb]
   \caption{Random Clustering Partition Algorithm}
   \label{alg:rc_sc}
\begin{algorithmic}
   \STATE {\bfseries Input:} $\mathbf{X} \in \mathcal{R}^{N \times D}, h$
   \STATE $\vec{d} \sim {\rm Bernoulli}(.5, D)$
   \STATE $s \sim {\rm DiscreteUniform}(0,h)$
   \STATE $\mathcal{C} \sim {\rm Sample}([1,...,N], 2^s)$
   \FOR{$a \in \mathcal{D}$}
   \STATE $\varrho(a) = argmin_{c \in \mathcal{C}} \left(\|(\mathbf{X}_c - \mathbf{X}_a) \odot \vec{d}\|\right)$
   \ENDFOR
\end{algorithmic}
\end{algorithm}

\section{Experiments}

\subsection{Kernel Efficacy}

We compare the results of using the Random Forest kernel against the Linear kernel and the Radial Basis Function with and without Automatic Relevance Detection using standard GP training methodology\footnote{Each kernel is optimized for training log-likelihood using a gradient-based optimizer with 20 random restarts.} on 6 real-world regression problems from the UCI Machine Learning repository. Both Random Forest and Fast Cluster were trained with $m = 200$.

The evaluation criteria that is most important for a Gaussian Process is the test log-likelihood, as this directly shows how well the posterior distribution has been learnt. It factors in not only how well the test data has been predicted, but also the predicted variances and covariances of the test data. Figure \ref{fig:ll_results} shows that both the Random Forest kernel and Fast Cluster kernel greatly outperform the standard kernels with respect to this measure on a variety of datasets. The graph shows the test log-likelihood on a symmetric log scale as the discrepancy is very large.

As GPs are often used simply as a point estimate regression method, it is still important to assess the kernels performance with respect to MSE. Figure \ref{fig:mse_results} shows that the Random Forest kernel predicts better in the majority of cases, though the improvement is not as pronounced as the improvement in log-likelihood. This shows that while the the resulting joint predictive posterior for the standard kernels can be very poorly fit, they may still result in accurate posterior mean predictions. 

\begin{figure}[h!]
\center
\includegraphics[width=250px]{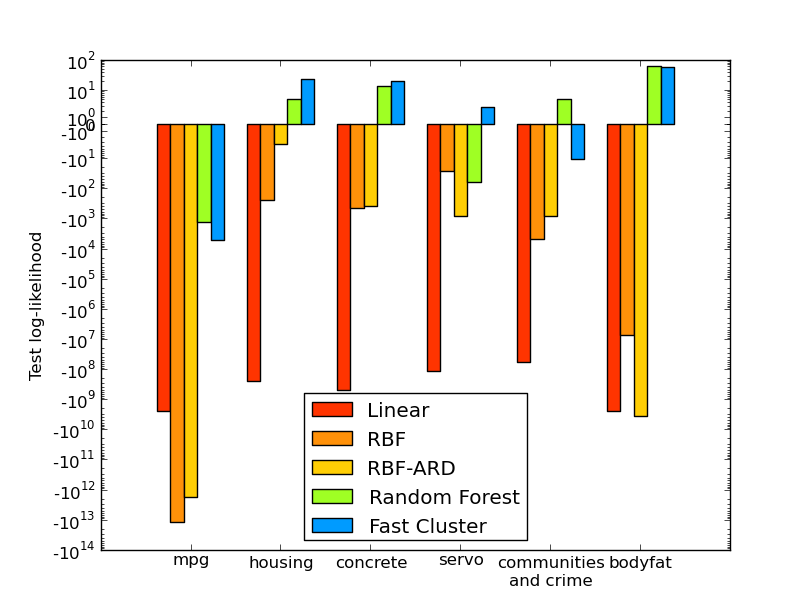}
\caption{Test log-likelihood on real-world regression problems (higher is better)}
\label{fig:ll_results}
\end{figure} 

\begin{figure}[h!]
\center
\includegraphics[width=250px]{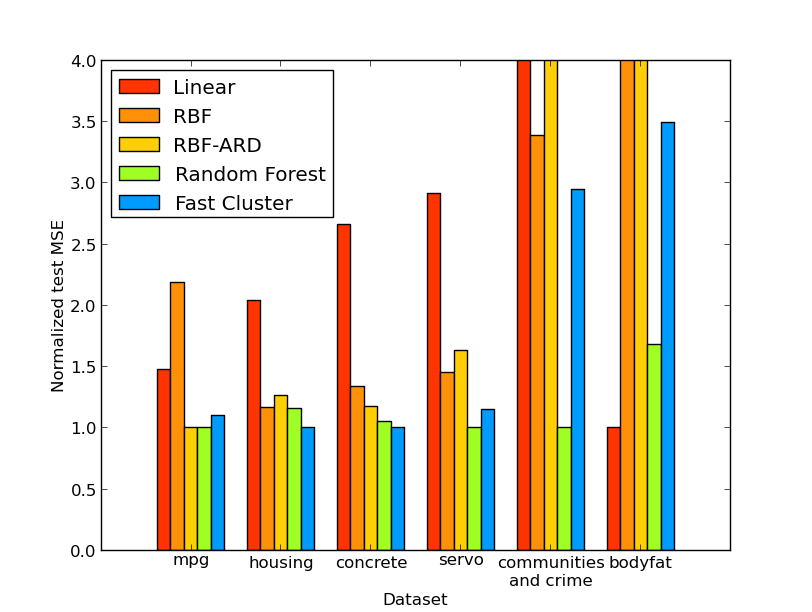}
\caption{Test MSE on real-world regression problems (lower is better)}
\label{fig:mse_results}
\end{figure}

\subsection{Approximation quality}

To understand the effect of the kernel approximation on the resulting predictions, we graph the MSE performance of both kernels for different values of $m$. In Figure \ref{fig:approximation} we can see that for low samples, the Random Forest kernel performs nearly as well as its optimum, whereas the Fast Cluster kernel improves continually untill around $200$ samples. This appears roughly consistent with what we might expect from the theoretical convergence of the kernel, as this equates to an error variance of approximately $.002$, which is a quite small element-wise error for a well-conditioned matrix with elements in the range $[0,1]$.

\begin{figure}[h!]
\center
\includegraphics[width=250px]{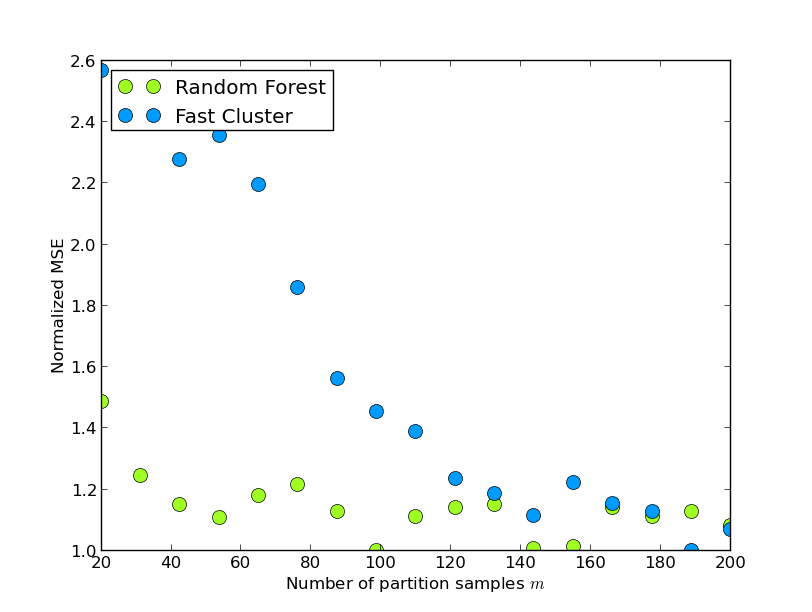}
\caption{Log-likelihood vs $m$ for `mpg' dataset}
\label{fig:approximation}
\end{figure} 

\subsection{Scalability}

To show that the scalability of the iterative algorithms is as we would expect from the computational analysis in Section \ref{sec:efficient_eval}, we show the runtime for Kernel PCA with the different kernels. Both Random Forest and Fast Cluster are trained with 100 random partitions. Note that Squared Exp PCA has a constant advantage, as the code is executed almost entirely in C, whereas Random Forest and Fast Cluster are partially executed in python. The experiments were run on a commidity PC with an Intel Core 2 Duo E8500 CPU and 8GB RAM.\\

Figure \ref{fig:pca_scaling} is a log-log plot, so the exponent of the scaling can be seen in the gradient of the curves. As predicted, the scaling of Fast Cluster is $O(N)$, processing around 100,000 points per minute on this machine. In practice, for these size datasets, Random Forest seems to scale as approximately $O(N^{1.5})$, while the RBF remains faithful to its theoretical $O(N^3)$.

As a comment on future scaling, it is worth noting here that both the Fast Cluster and Random Forest algorithms, as well as the matrix-vector product, are trivially parallelizable.

\begin{figure}[h!]
\center
\includegraphics[width=250px]{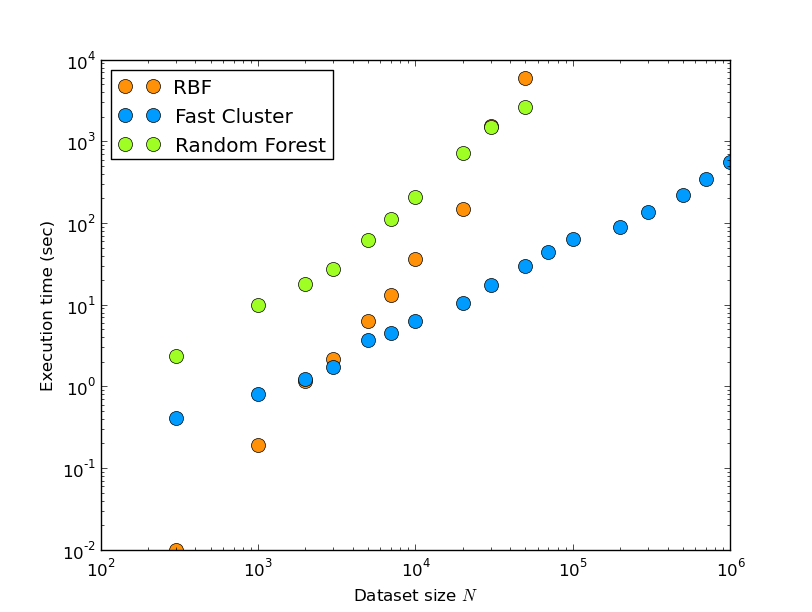}
\caption{Log-log plot of dataset size vs execution time for kernel PCA}
\label{fig:pca_scaling}
\end{figure} 

\section{Conclusion}

We have presented the Random Partition Kernel, a novel method for constructing effective kernels based on a connection between kernels and random partitions. We show how this can be used to simply and intuitively define a large set of kernels, many trivially from existing algorithms. The example kernels constructed using this method, the Random Forest kernel and Fast Cluster kernel, both show excellent performance in modelling real-world regression datasets with Gaussian Processes, substantially outperforming other kenels. We also demonstrate that the kernel allows for linearly scaling inference in GPs, SVMs and kernel PCA.

\bibliography{random_forest_kernel_icml}{}
\bibliographystyle{icmlstylefiles/icml2014}

\end{document}